\newtheorem{theorem}{Theorem}
\newtheorem{assumption}{Assumption}
\newtheorem{remark}{Remark}
\DeclareMathOperator*{\argmin}{arg\,min}
\newcommand{\Obs}[1]{\mathcal{O}^{#1}}
\title{\LARGE \bf Obstacle Avoidance in Dynamic Environments via Tunnel-following MPC with Adaptive Guiding Vector Fields}
\author{Albin Dahlin, Yiannis Karayiannidis
\thanks{This work has been supported by Chalmers AI Research Centre (CHAIR) and AB Volvo through the project AiMCoR.}%
\thanks{ A. Dahlin is with the Department of Electrical Engineering, Chalmers University of Technology, SE-412 96 Gothenburg, Sweden 
        {\tt\small albin.dahlin@chalmers.se}}%
 \thanks{ Y. Karayiannidis is with the Department of Automatic Control, LTH, Lund University,  SE-221 00 Lund, Sweden. Y. K. is a member of the ELLIIT Strategic Research Area at Lund University.
        {\tt\small yiannis@control.lth.se}}%
}
\begin{document}

\maketitle
\thispagestyle{empty}
\pagestyle{empty}

\begin{abstract}
    This paper proposes a motion control scheme for robots operating in a dynamic environment with concave obstacles. 
    A Model Predictive Controller (MPC) is constructed to drive the robot towards a goal position while ensuring collision avoidance without direct use of obstacle information in the optimization problem. This is achieved by guaranteeing tracking performance of an appropriately designed receding horizon path. The path is computed using a guiding vector field defined in a subspace of the free workspace where each point in the subspace satisfies a criteria for minimum distance to all obstacles.
    The effectiveness of the control scheme is illustrated by means of simulation. 
\end{abstract}

\section{Introduction}
Navigating autonomous agents to a goal position in a dynamic environment with both moving obstacles, such as humans and other autonomous systems, and static obstacles is a common problem in robotics. The resulting trajectory must be collision-free and obey possible robot constraints. Traditionally, motion planning problems have been solved considering static maps, but dynamic environments with moving obstacles require online adjustments of the planned robot path to avoid crashes. A common method to tackle such problems is to construct closed form control laws that result in closed loop reactive dynamical systems (DS) that possess desirable stability and convergence properties. Specifically, artificial potential fields \cite{khatib_85}, repelling the robot from the obstacles, have become popular \cite{ginesi_etal_19,stavridis_etal_17}. 
However, a drawback of the additive potential field methods is that they may yield local minimum other than the goal point, i.e. the robot could get stuck at a position away from the goal. To address this issue, navigation functions \cite{rimon_koditschek_92,loizou_11_2,paternain_etal_18} and harmonic potential fields have emerged \cite{connolly_etal_90, feder_slotine_97, daily_bevly_08, huber_etal_19, huber_etal_22}. A repeated assumption in the aforementioned methods enabling the proof of (almost) global convergence is the premise of disjoint obstacles. 
However, intersecting obstacles are frequently occurring, e.g. when modelling complex obstacles as a combination of several simpler shapes, or when the obstacle regions are padded to take robot radius or safety margins into account.
In \cite{dahlin_karayiannidis_22} a workspace modification algorithm was presented to obtain a workspace of disjoint obstacles such that the convergence properties of the aforementioned DS methods are preserved. 

With the increase of computational power and development of robust numerical solvers for optimization problems, optimization-based techniques, such as Model Predictive Control (MPC), have become popular. Compared to the closed form control laws, MPC allows for an easy encoding of the system constraints. MPC is typically used as a local planner given a global reference path or waypoints which are computed based on the static environment. All dynamic obstacles are accommodated in the MPC formulation. Commonly, the obstacle regions (or approximation of the regions) are explicitly expressed in the optimization problem, either as hard constraints \cite{schulman_etal_14,zhang_etal_21,brito_etal_19} or soft constraints by including a penalizing term in the cost function \cite{sanchez_etal_21,ji_etal_16}. Due to the receding horizon nature of the MPC, these works do not provide convergence guarantees. Specifically, in environments with large obstacles, or where intersecting obstacles creates concave regions, the MPC solution may lead to local attractors at obstacle boundaries.
While many MPC formulations focus on trajectory tracking, given a reference trajectory, path-following MPC \cite{faulwasser_findeisen_16} gives highest priority to the minimization of the robot deviation from some geometric reference path, with less focus on the velocity profile. Tunnel-following MPC \cite{vanduijkeren_19} extends the path-following MPC scheme by imposing a constraint on tracking error, restricting the robot to stay within some specified distance to the reference path.

In this work, we present a motion control scheme which combines a DS method for receding horizon path generation with a tunnel-following MPC. In this way, an attracting behavior towards the goal with ensured collision avoidance is obtained although no explicit obstacle constraints are used in the optimization problem formulation. In particular, the number of constraints for the inner control loop is independent of the number and shape of obstacles.
In contrast to a pure closed form approach, embedding the closed form DS scheme in an MPC scheme allows for simple adaptation of the robot constraints to find an admissible and smooth control input. Compared to other MPC approaches, collision avoidance is here achieved by relying on a reference path generator, simplifying the formulation of the optimal control problem to be independent of workspace complexity.
Overall, the contribution of the proposed approach is summarized below:
\begin{itemize}
    \item An MPC framework that allows realization of DS-generated trajectories when they are not obeying system constraints. 
    \item The MPC solver is guaranteed to provide existence of collision-free solutions at all times. 
    \item The formulation of the optimal control problem in the proposed MPC is independent of workspace complexity.
\end{itemize}

\section{Preliminaries}

\subsection{Starshaped sets and star worlds}
A set $A\subset \mathbb{R}^n$ is \textit{starshaped with respect to} (w.r.t.) $x$ if for every point $y\in A$ the line segment $l(x,y)$ is contained by $A$. The set $A$ is said to be \textit{starshaped} if it is starshaped w.r.t.  some point, i.e. $\exists x$ s.t. $l(x,y) \subset A, \forall y \in A$. 
The set $A$ is \textit{strictly starshaped w.r.t. $x$} if it is starshaped w.r.t. $x$ and any ray emanating from $x$ crosses the boundary only once. We say that $A$ is strictly starshaped if it is strictly starshaped w.r.t. some point.

Given a collection of obstacles $\mathcal{O} = \{\Obs{1}, \Obs{2}, ...\}$ in $\mathbb{R}^n$, the free space $\mathcal{F} = \mathbb{R}^n \setminus \bigcup_{\Obs{j}\in\mathcal{O}}\Obs{j}$ is said to be a \textit{star world} if all obstacles are strictly starshaped. A star world where all obstacles are mutually disjoint is defined as a \textit{disjoint star world}. For more information on starshaped sets and star worlds, see \cite{hansen_etal_20} and \cite{dahlin_karayiannidis_22}.

\subsection{Obstacle avoidance for dynamical systems in star worlds}
\label{sec:soads}
Given a star world, obstacle avoidance can be achieved using a DS approach\cite{huber_etal_19} with dynamics:
\begin{equation}
    \label{eq:ds_obs_avoidance}
    \dot{r} = \eta(r,r^g,\mathcal{O}) = M(r,\mathcal{O})(r^g-r),
\end{equation}
where $\mathcal{O}$ is the collection of strictly starshaped obstacles forming the star world, $\mathcal{F}^{\star}$, $r$ is the current robot position and $r^g\in \mathcal{F}^{\star}$ is the goal position. $M(\cdot,\cdot)$ is a modulation matrix used to adjust the attracting dynamics to $r^g$ based on the obstacles tangent spaces.
Convergence to $r^g$ is guaranteed for a trajectory following \eqref{eq:ds_obs_avoidance} from any initial position $r^0\in\mathcal{F}^{\star}$ if $\mathcal{F}^{\star}$ is a disjoint star world and no obstacle center point is contained by the line segment $l(r^0,r^g)$. For more information, see \cite{huber_etal_19,huber_etal_22}.

In \cite{dahlin_karayiannidis_22} the authors presented a method to establish a disjoint star world $\mathcal{F}^{\star} \subset \mathcal{F}$ from a free space, $\mathcal{F}$, formed by possibly intersecting convex and/or polygon obstacles. In essence, the algorithm combines all clusters of intersecting obstacles and extend the resulting obstacle regions such that they are strictly starshaped and mutually disjoint. The center points of the reshaped obstacles are placed outside the line segment $l(r^0,r^g)$ to satisfy the condition for convergence of the dynamics \eqref{eq:ds_obs_avoidance}.
The algorithm is not complete in the sense that there may be cases where it does not find a solution when such in fact exists. In case no disjoint star world is found, the algorithm returns a (intersecting) star world satisfying $\mathcal{F}^{\star} = \mathcal{F}$, and obstacle avoidance guarantees for \eqref{eq:ds_obs_avoidance} are remained while the convergence property is not obtained.

\section{Problem formulation}
Consider a robot of radius $a$ operating in the Cartesian plane with discrete-time dynamics for a sampling interval, $\Delta t$, given as
\begin{equation}
\label{eq:robot_model}
\begin{split}
    x_{k+1} &= f(x_k,u_k)\\
    p_k &= h(x_k).
\end{split}
\end{equation}
Here, $x_k \in\mathcal{X}\subset \mathbb{R}^n$ is the robot state, $p_k\in \mathbb{R}^2$ is the robot position and $u_k\in \mathcal{U}\subset \mathbb{R}^m$ is the control signal at time instance $k$. It is assumed that there exists a control input such that the robot does not move, i.e. $\forall x\in\mathcal{X},\ \exists u'\in \mathcal{U} \textnormal{ s.t. } f(x,u') = x$. The robot is operating in a world containing a collection of dynamic, possibly intersecting, obstacles, $\tilde{\mathcal{O}}_k = \{\tilde{\mathcal{O}}^1_k, \tilde{\mathcal{O}}^2_k, ...\}$, which are either convex shapes or polygons. 
\begin{remark}
Although $\tilde{\mathcal{O}}_k$ formally contains only convex shapes and polygons, the formulation allows for more general complex obstacles as intersections are allowed. In particular, any shape can be described as a combination of several convex and polygon regions.
\end{remark}
No future information of obstacle movement is available.
To take into account the robot radius we define the dilated obstacles as $\mathcal{O}_k = \{\tilde{\mathcal{O}}^j_k \oplus \mathbb{B}[\mathbf{0}, a]\}_{\tilde{\mathcal{O}}^j_k \in \tilde{\mathcal{O}}_k}$, where $\oplus$ is the Minkowski sum and $\mathbb{B}[\mathbf{0}, a]$ is the closed ball of radius $a$ centered at $\mathbf{0}=[0,0]^T$. The free space that includes all collision-free robot positions is then given as $\mathcal{F}_k = \mathbb{R}^2 \setminus \bigcup_{\Obs{j}_k \in \mathcal{O}_k}\Obs{j}_k$.

\begin{assumption}
\label{ass:quasi_static_obstacles}
The obstacle move slow compared to the sampling frequency, i.e. over a control sampling period, $\Delta t$, the obstacle positions are constant.
\end{assumption}
\begin{assumption}
\label{ass:not_aggressive_obstacles}
The obstacles do not actively move into a region occupied by the robot, such that the implication $p_{k+1}\in\mathcal{F}_k \Rightarrow p_{k+1}\in\mathcal{F}_{k+1}$ holds.
\end{assumption}

The objective is to find a control policy that enforces the robot to stay in the free set at all times, $p_k\in\mathcal{F}_k,\ \forall k$, and drives it to a goal position $p^g\in\mathbb{R}^2$.

In the following sections, we will omit the time notation for convenience unless some ambiguity exists.

\begin{figure}
    \centering
    \resizebox{\linewidth}{!}{
        \begin{tikzpicture}
    \node [draw,
        text width=8em, fill=blue!20, text centered,
        minimum height=5em, rounded corners
    ]  (obs_trans) at (0,0) {Workspace\\ modification};
    \node [draw,
        text width=8em, fill=blue!20, text centered, 
        minimum height=5em, rounded corners,
        right=5em of obs_trans
    ] (path_gen) {Reference path\\ generation};
    \node [draw,
        text width=8em, fill=blue!20, text centered, 
        minimum height=5em, rounded corners,
        right=3em of path_gen
    ] (controller) {Tunnel-following MPC};
    \node [draw,
        text width=5em, fill=white!20, text centered, 
        minimum height=3em, rounded corners,
        below=3em of path_gen
    ]  (robot) {Robot};
    \node [left=2em of obs_trans](scene){};
    \node [below=1em of path_gen](x){};
    \node [right=1em of controller](u){};
    \node [above=1.5em of controller](rho){};
    \draw[-stealth] (scene.center) -- (obs_trans.west) 
        node[near start,above]{$\mathcal{O}, p^g$};
    \draw[-stealth] (obs_trans.east) -- (path_gen.west) 
        node[midway,above,text width=5em,text centered]{$\mathcal{O}^{\star}, r^0, r^g$};
    \draw[-stealth] (path_gen.east) -- (controller.west) 
        node[midway,above]{$\mathcal{P}$};
    \draw[-] (controller.east) -- (u.center)
        node[near end,above]{$u$};
    \draw[-stealth] (u.center) |- (robot.east);
    \draw[-] (robot.north) -- (x.center)
        node[midway,left]{$x$};
    \draw[-stealth] (x.center) -| (controller.south);
    \draw[-stealth] (x.center) -| (obs_trans.south);
    \draw[-] (obs_trans.north) |- (rho.center)
        node[near start,right]{$\rho$};
    \draw[-stealth] (rho.center) -- (controller.north);
    \end{tikzpicture}
    }
    \caption{Proposed motion control scheme.}
    \label{fig:architecture}
\end{figure}
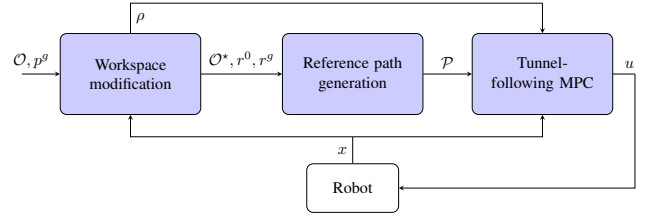

\section{Control design}
\label{sec:method}
We propose a motion control scheme depicted in Fig. \ref{fig:architecture} which consists of three main components. First, the obstacles are modified to form a disjoint star world. The star world is designed as a strict subspace of the free space with any interior point having an appropriately selected minimum clearance to the obstacles. Next, a DS approach which ensures obstacle avoidance and convergence to the goal within disjoint star worlds is utilized to generate a receding horizon reference path. Finally, a tunnel-following MPC is used to compute a control sequence which ensures close path tracking such that collision avoidance guarantees are obtained.
Details are given in the following subsections.

\subsection{Workspace modification}
\label{sec:workspace_modification}
The proposed method relies on generating a reference path with a (time-varying) minimum clearance, $\rho$, to all obstacles using the DS approach \eqref{eq:ds_obs_avoidance}. To this end, the set of inflated obstacles $\mathcal{O}^{\rho} = \{\Obs{j} \oplus \mathbb{B}(\mathbf{0}, \rho)\}_{\Obs{j} \in \mathcal{O}}$ is defined, see Fig. \ref{fig:sim_static_rho}, with corresponding clearance set $\mathcal{F}^{\rho} = \mathbb{R}^2 \setminus \bigcup_{\Obs{j}\in\mathcal{O}^{\rho}}\Obs{j} \subset \mathcal{F}$. Here, $\mathbb{B}(\mathbf{0}, \rho)$ is the open ball of radius $\rho$ centered at $\mathbf{0}$. That is, $\mathcal{F}^{\rho}$ is the set of all robot positions where the closest distance to an obstacle is at least $\rho$. 
As stated in Section \ref{sec:soads}, any star world is positively invariant for the dynamics \eqref{eq:ds_obs_avoidance} and convergence to a goal position is guaranteed for a disjoint star world. Since $\mathcal{O}^{\rho}$ may include both intersecting and non-starshaped obstacles, $\mathcal{F}^{\rho}$ provides none of the aforementioned guarantees. Thus, the objective of the workspace modification is to find a disjoint star world $\mathcal{F}^{\star}\subset\mathcal{F}^{\rho}$ containing an initial position, $r^0$, and a goal position, $r^g$, for the reference path. A procedure to specify $\rho$ and to compute $\mathcal{O}^{\star}$, $r^0$ and $r^g$ is given in Algorithm \ref{alg:obstacle_transformation} and the steps are elaborated below.

\LinesNumbered
\begin{algorithm}
\caption{Workspace modification}
\label{alg:obstacle_transformation}
\SetKwRepeat{Do}{do}{while}
\SetKwInOut{Input}{Input}
\SetKwInOut{Output}{Output}
\SetKwInOut{Parameters}{Parameters}
\SetKwProg{Init}{init}{}{}
\Parameters{$\gamma$, $\bar{\rho}$}
\Input{$\mathcal{O}$, $p^g$, $p$}
\Output{$\mathcal{O}^{\star}, r^0, r^g, \rho$}
$\rho \gets \bar{\rho} / \gamma$ \label{l:rho0}\;
\Do{$\mathcal{P}^0 = \emptyset$ \label{l:Pi_nonempty}}{
$\rho \gets \gamma\rho$\;
$\mathcal{P}^0 \gets \mathbb{B}(p, \rho) \setminus \{\Obs{j} \oplus \mathbb{B}(\mathbf{0}, \rho)\}_{\Obs{j}\in \mathcal{O}}$\;
}
$\mathcal{O}^{\rho} \gets \{\Obs{j} \oplus \mathbb{B}[\mathbf{0}, \rho]\}_{\Obs{j}\in \mathcal{O}}$\;
$r^0 = \argmin_{r^0\in \mathcal{P}^0} \lVert r^0 - p \lVert_2$ \label{l:ri}\;
$r^g = \argmin_{r^g\not\in \mathcal{O}^{\rho}} \lVert r^g - p^g \lVert_2$\label{l:rg}\;
Compute $\mathcal{O}^{\star}$ using Algorithm 2 in \cite{dahlin_karayiannidis_22} with $\mathcal{O}^{\rho}$, $r^0$ and $r^g$\label{l:O_star}\;
\ForEach{$\Obs{j} \in \mathcal{O}^{\star}$ \label{l:ch_init}}{
\If{$CH(\Obs{j}) \cap \left\{r^0 \cup r^g \cup \bigcup_{\Obs{l}\in\mathcal{O}^{\star}\setminus\Obs{j}}\Obs{l} \right\} = \emptyset$}{
$\Obs{j} \gets CH(\Obs{j})$
}
}\label{l:ch_end}
\end{algorithm}

\subsubsection{Clearance selection (line \ref{l:rho0}-\ref{l:Pi_nonempty})}
The initial reference position is chosen within the initial reference set $\mathcal{P}^0 = \mathcal{F}^{\rho} \cap \mathbb{B}(p, \rho)$, depicted as green area in Fig. \ref{fig:sim_static_rho}. To have a valid initial reference position, $\rho$ is set to a strict positive value such that $\mathcal{P}^0$ is nonempty. Such a $\rho$ is guaranteed to exist for any collision-free robot position\footnote{$\mathcal{F}$ is an open set, so $\exists \rho>0$ s.t. $\mathbb{B}(p,\rho)\in\mathcal{F}$. Thus, the closest distance to an obstacle is at least $\rho$, i.e. $p \in \mathcal{F}^{\rho}$, and it follows that $p\in\mathcal{P}^0 \Rightarrow \mathcal{P}^0\neq\emptyset$.}, $p\in \mathcal{F}$. In Algorithm \ref{alg:obstacle_transformation}, $\rho$ is initially set to a base value $\bar{\rho}$ and reduced by a factor $\gamma$ until $\mathcal{P}^0\neq\emptyset$. Here, $\bar{\rho}\in \mathbb{R}^+$ and $\gamma\in(0,1)$ are algorithm parameters.

\subsubsection{Initial and goal reference position selection (line \ref{l:ri}-\ref{l:rg})}
The reference path should ideally be a curve from the current robot position, $p$, to the goal, $p^g$. 
However, since $\mathcal{F}^{\rho}$ is a strict subset of $\mathcal{F}$ it is possible that $p \not\in \mathcal{F}^{\rho}$ or $p^g \not\in \mathcal{F}^{\rho}$. In particular, this occurs when the robot or goal position is located closer than a distance $\rho$ to an obstacle. To account for these situations, we define the initial reference position $r^0 = \argmin_{r^0\in \mathcal{P}^0} \lVert r^0 - p \lVert_2$ and reference goal $r^g = \argmin_{r^g\in \mathcal{F}^{\rho}} \lVert r^g - p^g \lVert_2$.

\subsubsection{Establishment of a disjoint star world (line \ref{l:O_star})}
Using Algorithm 2 from \cite{dahlin_karayiannidis_22}, a disjoint star world $\mathcal{F}^{\star}\subset\mathcal{F}^{\rho}$ is constructed based on the inflated obstacles, $\mathcal{O}^{\rho}$, such that $r^0\in\mathcal{F}^{\star}$ and $r^g\in\mathcal{F}^{\star}$.

\subsubsection{Convexification (line \ref{l:ch_init}-\ref{l:ch_end})}
While convergence to the goal position is guaranteed following the dynamics \eqref{eq:ds_obs_avoidance} for any disjoint star world, the behaviour is not always the most intuitive in concave regions. To obtain a more direct path, any concave obstacle is made convex using the convex hull provided this does not violate the two conditions: 1) $r^0$ and $r^g$ remain exterior points of the obstacle, and 2) the resulting obstacle region does not intersect with any other obstacle. When the obstacles are made convex, unnecessary "detours" in concave regions are avoided, compare Fig. \ref{fig:sim_static_ch} and \ref{fig:sim_static_goal}. 

\subsection{Reference path generation}
The reference path is given as a parameterized regular curve
\begin{equation}
\label{eq:P_star}
    \mathcal{P} = \left\{r\in \mathbb{R}^2 : s \in [0, N] \rightarrow r(s) \right\}
\end{equation}
where $N\in\mathbb{N}^+$ is a parameter determining the path horizon and $r$ is given by the solution to the ODE
\begin{equation}
\label{eq:reference_dynamics}
    \frac{dr(s)}{ds} = \Delta p_{\max}\bar{\eta}(r(s),r^g, \mathcal{O}^{\star}),\quad r(0) = r^0.
\end{equation}
Here, $\bar{\eta}(\cdot,\cdot,\cdot)=\frac{\eta(\cdot,\cdot,\cdot)}{\lVert \eta(\cdot,\cdot,\cdot) \rVert_2}$ are the normalized dynamics in \eqref{eq:ds_obs_avoidance}
and $\Delta p_{\max} = \max_{u\in \mathcal{U},x\in\mathcal{X}}\lVert \frac{\partial h}{\partial x}(x)f(x,u) \rVert_2$ is the maximum linear displacement which can be achieved by the robot in one sampling instance. The use of the normalized dynamics is instrumental for the MPC problem formulation.
Since $\bar{\eta}$ is a vector of unit length, the arc length of the receding horizon path, $\mathcal{P}$, is $\Delta p_{\max}N$ unless the dynamics \eqref{eq:reference_dynamics} converge to $r^g$ before this distance is reached.
As the path is initialized in the star world $\mathcal{F}^{\star}$ and the dynamics are positively invariant in any star world, we have $\mathcal{P}\subset \mathcal{F}^{\star}$. Moreover, since $\mathcal{F}^{\star}\subset \mathcal{F}^{\rho}$, any point in $\mathcal{P}$ is at least at a distance $\rho$ from any obstacle in $\mathcal{O}$.
Assuming $\mathcal{F}^{\star}$ has successfully been constructed as a disjoint star world, $r$ is guaranteed to converge towards $r^g$, i.e. $\lim_{N\rightarrow \infty} r(N) = r^g$.

\subsection{Tunnel-following MPC}
To find a control input which drives a robot with dynamics \eqref{eq:robot_model} along the reference path \eqref{eq:P_star}, a nonlinear MPC is formulated. The objective is to find a solution which result in a fast movement while staying close enough to the reference path such that collision avoidance is obtained. 

To derive a trajectory from the reference path, the state is extended with the path coordinate $s$ and a virtual control signal $\Delta s$ is introduced which determines the path coordinate increment, i.e. $s_{i+1}=s_i+\Delta s_i$. 
The MPC is formulated based on the extended state $z_i=[\bar{x}^T_i\ s_i]^T$ with dynamics
\begin{equation}
\label{eq:z_dyn}
    z_{i+1} = \begin{bmatrix}f(\bar{x}_i,\bar{u}_i)\\ s_i + \Delta s_i \end{bmatrix} = f_z(z_i,\mu_i).
\end{equation}
where $\mu_i = [\bar{u}^T_i\ \Delta s_i]^T$ is the extended control signal. Here, we have used the notation $\bar{x}_i$ and $\bar{u}_i$ to distinguish the internal variables of the controller from the real system variables.
According to \eqref{eq:P_star}, the path coordinate is restricted to $[0,N]$ and we have the state constraint $z_i\in \mathcal{Z}=\mathcal{X}\times [0,N]$. The control input is constrained by $\mu_i \in \mathcal{M} = \mathcal{U} \times [0, 1]$. The lower bound on $\Delta s$ is chosen to ensure a forward motion of the reference trajectory along $\mathcal{P}$ and the upper bound is set to $1$ corresponding to a reference position movement equal to the maximal achievable linear displacement of the robot.
Similar to a tunnel-following MPC scheme \cite{vanduijkeren_19}, we impose a constraint on the tracking error, $\varepsilon_i = r(s_i)-h(\bar{x}_i)$, such that the robot position is in a $\rho$-neighborhood of the reference position\footnote{In contrast to \cite{vanduijkeren_19} we apply strict, and not soft, constraints on the tracking error. This can be done and still ensure existence of solution from the design of the reference path. In particular, since $r(0)\in \mathcal{P}^0\subset \mathbb{B}(p,\rho)$.}. That is, the tracking error constraint is $\varepsilon \in \mathcal{E}=\mathbb{B}(\mathbf{0},\rho)$. 
The optimization problem for the MPC is proposed as follows:
\begin{subequations}
\begin{align}
&\!\min_{\bm{\mu}} & &
-c_s s_N + c_e\varepsilon^T_N\varepsilon_N\\
&\text{s.t.} & & z_0 = [x^T, 0]^T \label{eq:mpc_init} \\
&i\in\mathcal{N}: &\  & z_{i+1} = f_z(z_i,\mu_i) \label{eq:mpc_dyn} \\
&  &      & \mu_i \in \mathcal{M} \label{eq:mpc_mu}\\ 
&  &      & z_{i+1} \in \mathcal{Z} \label{eq:mpc_z}\\
&  &      & \varepsilon_{i+1} \in \mathcal{E} \label{eq:mpc_error}
\end{align}
\label{eq:mpc}
\end{subequations}
\noindent where $\mathcal{N}=\{0,..,N-1\}$ and $\bm{\mu}=\{\mu_i : i\in\mathcal{N}\}$ is used to denote the control sequence over the horizon. The positive scalars $c_s$ and $c_e$ are tuning parameters.
The cost function is designed to motivate a solution where the robot moves in a forward direction of $\mathcal{P}$. This is done by maximizing the final path coordinate, $s_N$, or equivalently since $s_N=\sum_{i\in\mathcal{N}}\Delta s_i$, maximizing path increment at each time instance. That is, the reference position is desired to move along $\mathcal{P}$ at a fast rate. This in turn drives the robot position in the same direction due to the coupling effect of the tracking error constraint \eqref{eq:mpc_error}. At the same time, \eqref{eq:mpc_error} restricts the reference position from diverging from the robot position at robot configurations when the linear velocity in the tangential direction of $\mathcal{P}$ is limited. 
To boost forward motion further, and to provide converging behavior to $p_g$ when $r_N=p_g$, a penalizing term on the final tracking error is also included.
The cost function can be tailored for the robot at hand to favor certain behaviors.
For instance, a regularization term on (changes of) the control input can be included to provide a smoother trajectory. Moreover, a terminal cost to reach a desired final orientation can be incorporated.

The control law is given by
\begin{equation}
\label{eq:control_law}
    u = \bar{u}_{0}^*
\end{equation}
where $\bar{u}_{0}^*$ is extracted from the initial control input, $\mu_{0}^*$, of the optimal solution, $\bm{\mu}^*$, to \eqref{eq:mpc}. Although no explicit soft or hard constraints regarding the obstacles are used in the MPC formulation, obstacle avoidance is achieved as stated by the following theorem. This is obtained by ensuring a close tracking, $\lVert\varepsilon\rVert_2 < \rho$, of the path which is at least at a distance $\rho$ from any obstacle.
\begin{theorem}
\label{theorem:collision_avoidance}
The trajectory for a robot with dynamics \eqref{eq:robot_model} under control law \eqref{eq:control_law} is collision-free with respect to the obstacles $\mathcal{O}_k$, i.e. $p_k \in \mathcal{F}_k, \forall k$.
\end{theorem}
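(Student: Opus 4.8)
The plan is to argue by induction on the time index $k$, carrying the invariant $p_k \in \mathcal{F}_k$. The geometric heart of the statement is a single triangle inequality: every point of the reference path $\mathcal{P}$ lies at distance at least $\rho$ from every obstacle in $\mathcal{O}_k$ (because $\mathcal{P} \subset \mathcal{F}^{\star} \subset \mathcal{F}^{\rho}$, so any path point has clearance at least $\rho$), whereas the tracking-error constraint \eqref{eq:mpc_error} forces the realized robot position to lie \emph{strictly} within $\rho$ of the tracked reference point. Combining these should place the robot position strictly away from every obstacle. The base case is immediate: the robot starts collision-free, $p_0\in\mathcal{F}_0$, which is also the standing requirement for Algorithm~\ref{alg:obstacle_transformation} to return a valid $\rho$ (recall the footnote guaranteeing $\rho>0$ precisely for $p\in\mathcal{F}$).

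The step I expect to be the main obstacle is verifying that \eqref{eq:mpc} admits a feasible — hence optimal — solution at every $k$, so that the control law \eqref{eq:control_law} is well defined. Here I would exhibit an explicit ``stay-still'' candidate: take $\Delta s_i = 0$ and $\bar{u}_i = u'$, where $u'$ is the holding input with $f(x,u')=x$ guaranteed by the standing assumption on \eqref{eq:robot_model}. This candidate yields $s_i=0$ and $\bar{x}_i = x$ for all $i$, so the tracking error is constant, $\varepsilon_i = r(0)-h(x) = r^0 - p$. Since the workspace modification selects $r^0\in\mathcal{P}^0\subset\mathbb{B}(p,\rho)$, we get $\lVert \varepsilon_i\rVert_2 < \rho$, i.e.\ $\varepsilon_i\in\mathcal{E}$, while \eqref{eq:mpc_mu} and \eqref{eq:mpc_z} are met trivially. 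Crucially, feasibility is re-established \emph{from scratch} at each $k$, because the full pipeline of Fig.~\ref{fig:architecture} is re-run with the current $p_k$; thus no shifting or recursive-feasibility argument is required, and one needs only $p_k\in\mathcal{F}_k$ to ensure $\mathcal{P}^0\neq\emptyset$ — which is exactly the inductive hypothesis.

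For the induction step, let $\bm{\mu}^*$ be an optimal solution at time $k$. Applying $u=\bar{u}_0^*$ to the true plant gives $x_{k+1}=f(x_k,\bar{u}_0^*)=\bar{x}_1$, hence $p_{k+1}=h(\bar{x}_1)$, so the realized position coincides with the first predicted position. Then \eqref{eq:mpc_error} gives $\lVert r(s_1)-p_{k+1}\rVert_2 = \lVert\varepsilon_1\rVert_2 < \rho$, while $r(s_1)\in\mathcal{F}^{\rho}$ gives $\mathrm{dist}(r(s_1),\Obs{j}_k)\geq\rho$ for every $\Obs{j}_k\in\mathcal{O}_k$. The triangle inequality then yields, for every obstacle point $q\in\Obs{j}_k$, the bound $\lVert p_{k+1}-q\rVert_2 \geq \lVert r(s_1)-q\rVert_2 - \lVert r(s_1)-p_{k+1}\rVert_2 > \rho-\rho = 0$, so $p_{k+1}\notin\bigcup_j\Obs{j}_k$, i.e.\ $p_{k+1}\in\mathcal{F}_k$. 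Finally, Assumption~\ref{ass:not_aggressive_obstacles} upgrades this to $p_{k+1}\in\mathcal{F}_{k+1}$, closing the induction.

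Beyond feasibility, the only delicate bookkeeping is the matching of strict versus non-strict inequalities and the corresponding open/closed dilations ($\mathbb{B}(\mathbf{0},\rho)$ versus $\mathbb{B}[\mathbf{0},\rho]$), so that the strict tracking bound $\lVert\varepsilon\rVert_2<\rho$ compensates exactly for the path clearance $\rho$ and the final distance to every obstacle is genuinely positive.
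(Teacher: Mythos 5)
Your proposal is correct and follows essentially the same route as the paper's proof: feasibility of \eqref{eq:mpc} is established via the same ``stay-still'' candidate exploiting $r^0\in\mathcal{P}^0\subset\mathbb{B}(p,\rho)$, and collision avoidance of $p_{k+1}=h(\bar{x}_1)$ follows from the tracking-error constraint together with $\mathcal{P}_k\subset\mathcal{F}^{\rho}_k$ (your explicit triangle inequality is just an unpacked version of the paper's observation that $\mathcal{P}_k\oplus\mathbb{B}(\mathbf{0},\rho_k)\subset\mathcal{F}_k$), with Assumptions \ref{ass:quasi_static_obstacles} and \ref{ass:not_aggressive_obstacles} closing the step from $\mathcal{F}_k$ to $\mathcal{F}_{k+1}$. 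Your explicit induction framing and the attention to the open/closed ball bookkeeping are slightly more detailed than the paper's presentation but do not constitute a different argument.
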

\begin{proof}
According to Assumption \ref{ass:quasi_static_obstacles} and \ref{ass:not_aggressive_obstacles} it suffices to show that $p_{k+1}\in \mathcal{F}_k$ at any sampling instance $k$.
First, we show existence of a solution to \eqref{eq:mpc} at all times. Define the trivial solution, $\bm{\mu}'$, as $\mu'_i=[\bar{u}'^T, 0]^T, \forall i \in \mathcal{N}$ where $\bar{u}'\in\mathcal{U} \textnormal{ s.t. } f(x_k,\bar{u}')=x_k$. Obviously, this satisfies constraint \eqref{eq:mpc_mu}. From \eqref{eq:mpc_init}-\eqref{eq:mpc_dyn} we get $z_{i+1} = [x^T_k, 0]^T, \forall i\in \mathcal{N}$. With current robot state $x_k\in\mathcal{X}$, we can conclude that \eqref{eq:mpc_z} is satisfied. Moreover, this gives $\varepsilon_{i+1} = r^0_k-p_k, \forall i\in \mathcal{N}$.
By construction, $r^0_k\in \mathcal{P}^0_k \subset \mathbb{B}(p_k, \rho_k)$ and it follows that $\varepsilon_{i+1} \in \mathcal{E}, \forall i\in\mathcal{N}$, satisfying constraint \eqref{eq:mpc_error}.
Hence, $\bm{\mu}'$ is a feasible solution.
Any solution, $\bm{\mu}$, to \eqref{eq:mpc} satisfies $\varepsilon_{i+1}\in \mathcal{E}, \forall i\in \mathcal{N}$. This implies $h(\bar{x}_{i+1})\in\mathcal{P}_k\oplus \mathbb{B}(\mathbf{0},\rho_k), \forall i\in \mathcal{N}$. Since $\mathcal{P}_k\subset\mathcal{F}^{\rho}_k$ we thus have $h(\bar{x}_1) \in \mathcal{F}_k$. Applying control law \eqref{eq:control_law}, $u_k=\bar{u}_0$, leads to $x_{k+1} = f(x_k,\bar{u}_0) = \bar{x}_1$ from \eqref{eq:mpc_init}-\eqref{eq:mpc_dyn}. Hence, $p_{k+1}=h(\bar{x}_1)\in\mathcal{F}_k$.
\end{proof}

The MPC works as a bridge to incorporate the robot constraints and find an admissible control sequence resulting in a smooth path within the clearance distance to the reference path. 
From this perspective, a short horizon is sufficient, e.g. 2-3 samples. However, when the orientation for a non-holonomic robot is poorly aligned with the reference path direction, the MPC may yield a solution where the robot is standing still if the constraints prohibits the robot from realigning sufficiently fast within the horizon. Hence, the horizon should be adapted for the admissible robot reorientation abilities, e.g. angular velocity limit for a unicycle robot.

\section{Implementation aspects\protect\footnote{Code is available at https://github.com/albindgit/starworld\_tunnel\_mpc.}}
\label{sec:implementation}
To implement the motion control scheme presented in Section \ref{sec:method} in a real-time application, possible adjustments are here presented. The adjustments aim at reducing the computational complexity and obtaining a fixed upper bound on computation time.

\subsubsection{Maximum workspace modification time}
For a fixed upper bound on computation time, Algorithm \ref{alg:obstacle_transformation} is terminated after a specified maximum time. If the algorithm is terminated prematurely before $\mathcal{O}^{\star}$ has been generated (before line \ref{l:O_star}), $\mathcal{O}^{\star}$ is set such that $\mathcal{F}^{\star}=\mathcal{F}^{\rho}$
where any non-starshaped polygon in $\mathcal{O}^{\rho}$ is divided into sub-obstacles by convex decomposition. In this way, $\mathcal{F}^{\star}$ is a star world and positively invariant for the dynamics \eqref{eq:reference_dynamics}. 

\subsubsection{Maximum reference generation time}
In practice, the reference path is computed by discrete integration of \eqref{eq:reference_dynamics} over an interval $s\in [0,N]$. The integration step size depend on current proximity to $\mathcal{O}^{\star}$ in order to guarantee that no obstacle boundary is penetrated.
Similar to the workspace modification, a maximum computation time is used to terminate the simulation when exceeded. In case the path generation is terminated prematurely, at $s' < N$, the path is extended with a static final position, i.e. $r(s) = r(s'), \forall s\in [s', N]$. 

\subsubsection{Reference buffering}
It is practical to reuse the computed reference path from previous sampling instance, appropriately shifted, if it is still collision-free and within a distance $\rho$ from the robot position, i.e. if $\mathcal{P}_{k-1}\subset \mathcal{F}^{\star}_k$ and $\mathcal{P}_{k-1}\cap\mathbb{B}(p_k,\rho_k)\neq \emptyset$.
The reference path integration is then initialized at the final position of $\mathcal{P}_{k-1}$. 

\subsubsection{Reference approximation}
To simplify calculation of $\varepsilon_i$, the discrete reference path, $\mathcal{P}$, is replaced by a function approximation, $\hat{r}(s)$, e.g. using polynomial regression. The approximation is biased to enforce $\hat{r}(0)=r(0)$ such that Theorem \ref{theorem:collision_avoidance} is not compromised. To account for approximation errors, the constraint for the tracking error, $\varepsilon_i=\hat{r}(s_i)-h(\bar{x}_i)$, is adjusted to $\mathcal{E} = \mathbb{B}(\mathbf{0}, \rho - \epsilon)$, where $\epsilon$ is the maximum approximation error
\begin{equation}
    \epsilon = \max_{s\in[0,N]} \lVert \hat{r}(s) - r(s) \rVert_2.
\end{equation}
It is assumed that a sufficiently flexible function representation is used such that $\epsilon < \rho$.

\section{Results}
We consider two scenarios: 1) a static scene with variations, and 2)  a moving obstacle in a static scene with corridors. The first one illustrates the workspace modification procedure and convergence while the second one illuminates the properties of the MPC. Both scenarios assume unicycle robot dynamics:
\begin{equation}
\label{eq:unicycle}
    f(x,u) = x + \begin{bmatrix}v\cos\theta\\
    v\sin\theta\\
    \omega\end{bmatrix}\Delta t,\quad h(x) = \begin{bmatrix}p^x\\p^y\end{bmatrix},
\end{equation}
where $x=[p^x,p^y,\theta]^T$ are the Cartesian position [m] and orientation [rad] of the robot. The control inputs $u=[v\ \omega]^T$ are the linear and angular velocities which are bounded by $[0, 1.5]$ m/s and $[-1.5, 1.5]$ rad/s, respectively. The sampling rate is $\Delta t = 0.2$s and as function approximation of the reference path, a polynomial of degree 10 is used.
For a smooth trajectory, the stage cost is extended with a regularization term on control input variation, $\sum_{i\in\mathcal{N}}\Delta\bar{u}^T_iR\Delta \bar{u}_i$. Here, $\Delta \bar{u}_i = \bar{u}_i-\bar{u}_{i-1}$ is the control variation, with $\bar{u}_{-1} = u_{k-1}$ being the previously applied control input, and where $R$ is a positively definite $2\times 2$ matrix. All numerical values for the control parameters are stated in Table \ref{tab:params}.

\begin{table}[!t]
\renewcommand{\arraystretch}{1.3}
\caption{Control Parameters}
\label{tab:params}
\centering
\begin{tabular}{c||c||c||c||c||c}
\hline
\bfseries $\bar{\rho}$ & $\gamma$ & $N$ & $c_s$ & $c_e$ & $R$\\
\hline\hline
0.3 & 0.5 & 5 & 500 & 100 & diag$(250, 2.5)$\\
\hline
\end{tabular}
\end{table}

In Fig. \ref{fig:obstacle_transformation} a static scene with five intersecting obstacles is shown. As seen in Fig. \ref{fig:sim_static_ch}-\ref{fig:sim_static_goal}, the resulting starshaped obstacle is depending on the robot position, goal position and other obstacles not in the combined cluster. When the starshaped obstacle can be extended with the convex hull, the path to the goal is more direct compared to if it is a concave obstacle. In each case, however, the guiding vector field has the goal as a global attractor. 

\def\figscale{0.48}
\begin{figure}
    \begin{subfigure}[t]{\figscale\linewidth}
        \includegraphics[width=\linewidth,trim={4cm 1cm 3.5cm 1.7cm},clip]{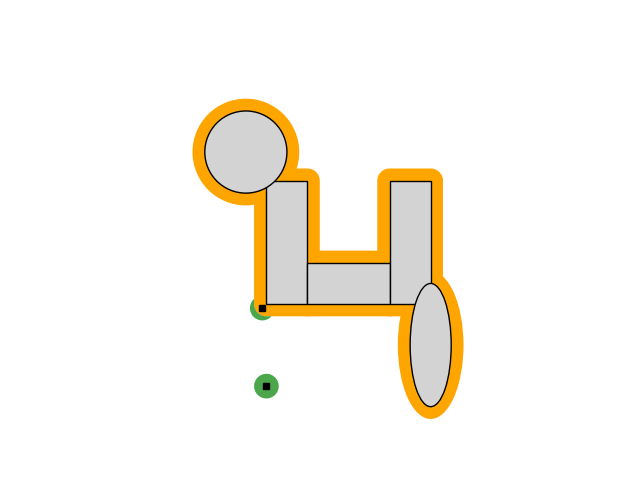}
         \caption{The initial reference set, $\mathcal{P}^0$, (green) for two robot positions.}
         \label{fig:sim_static_rho}
    \end{subfigure}
    \hfill
    \begin{subfigure}[t]{\figscale\linewidth}
        \includegraphics[width=\linewidth,trim={4cm 1cm 3.5cm 1.7cm},clip]{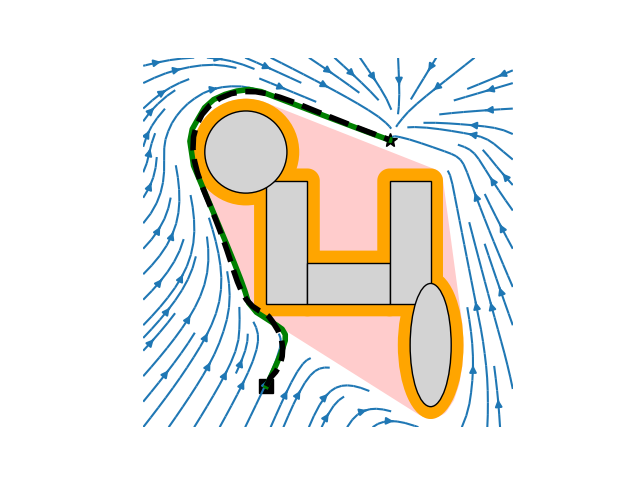}
         \caption{When possible, the cluster is extended with the convex hull.}
         \label{fig:sim_static_ch}
    \end{subfigure}
    \vskip 0.5\baselineskip
    \begin{subfigure}[t]{\figscale\linewidth}
        \includegraphics[width=\linewidth,trim={4cm 1cm 3.5cm 1.7cm},clip]{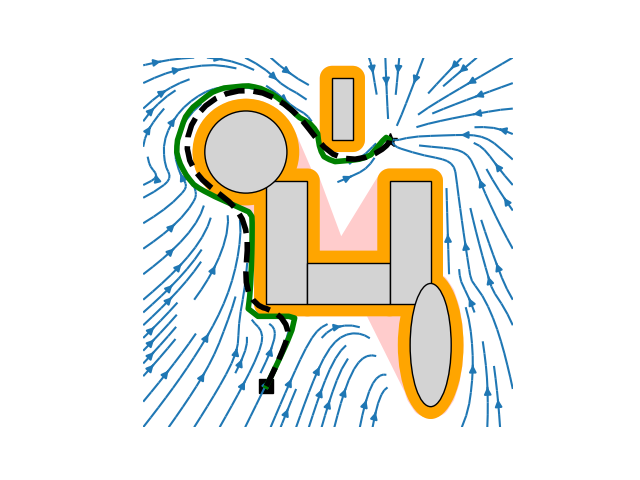}
         \caption{The cluster is not extended by the convex hull as this would lead to an intersection with the cluster external obstacle.}
         \label{fig:sim_static_external_obs}
    \end{subfigure}
    \hfill
    \begin{subfigure}[t]{\figscale\linewidth}
        \includegraphics[width=\linewidth,trim={4cm 1cm 3.5cm 1.7cm},clip]{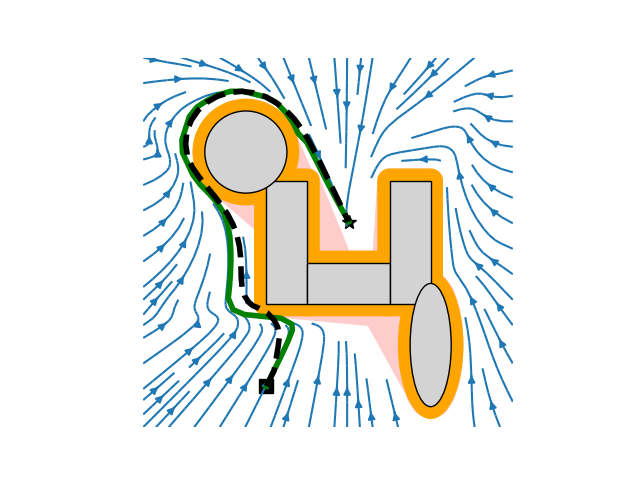}
         \caption{The cluster is adjusted to not include the goal point.}
         \label{fig:sim_static_goal}
    \end{subfigure}
    \caption{Each obstacle in $\mathcal{O}$ (grey) is dilated with $\rho$ to form $\mathcal{O}^{\rho}$ (yellow). Next, all intersecting obstacles are combined and extended with the starshaped hull (convex hull if possible) to form $\mathcal{O}^{\star}$ (red). The resulting robot path from initial position (square) to goal position (star) is shown as dashed black line. The vector field (blue arrows) and resulting path from initial position (green line) for the reference dynamics \eqref{eq:reference_dynamics} are also shown.}
    \label{fig:obstacle_transformation}
\end{figure}

In Fig. \ref{fig:sim_corridor} a scenario with three static polygons forming two corridors and a moving circular obstacle is depicted. The robot starts on one side of the corridors while the goal position is placed on the opposite side.
As seen, the robot initially enters the lower corridor to take the most direct path towards the goal but reroutes its path to the upper corridor as the lower gets blocked. In situations where the robot is well aligned with the reference path (e.g. Fig. \ref{fig:corridor_0_0} and \ref{fig:corridor_4_0}), the path speed can be set close to maximal such that $s_N\approx N$. That is, the last reference point, $\hat{r}(s_N)$, and thus the predicted robot position according to the coupling constraint \eqref{eq:mpc_error}, is close to the end of the reference path $\mathcal{P}$.
During the rotation in the lower corridor (Fig. \ref{fig:corridor_2_4}), the kinematics of the robot in combination with the tracking error constraint prevent the path speed to be maximal such that $s_N\ll N$, i.e. $\hat{r}(s_N)$ is not at the end of $\mathcal{P}$, and the robot is moving more slowly.

\begin{figure*}
        \begin{subfigure}[t]{0.24\textwidth}
                \includegraphics[width=\linewidth,trim={2.2cm 1.5cm 2.3cm 1.5cm},clip]{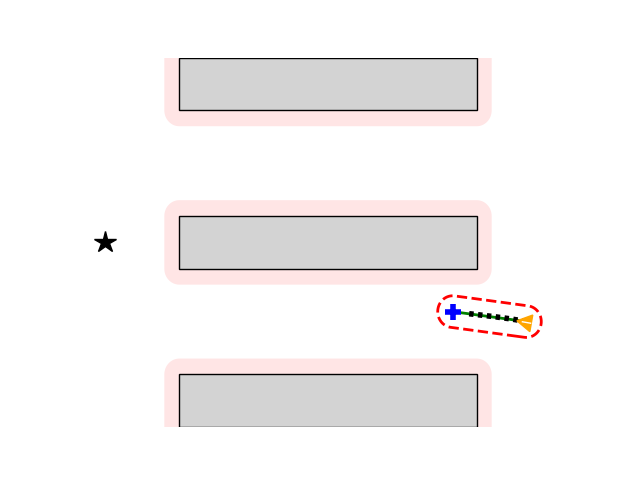}
                \caption{Time $t=0.0$: Both corridors are free and the robot starts moving according to the reference path into the lower corridor.}
                \label{fig:corridor_0_0}
        \end{subfigure}%
        \hfill
        \begin{subfigure}[t]{0.24\textwidth}
                \includegraphics[width=\linewidth,trim={2.2cm 1.5cm 2.3cm 1.5cm},clip]{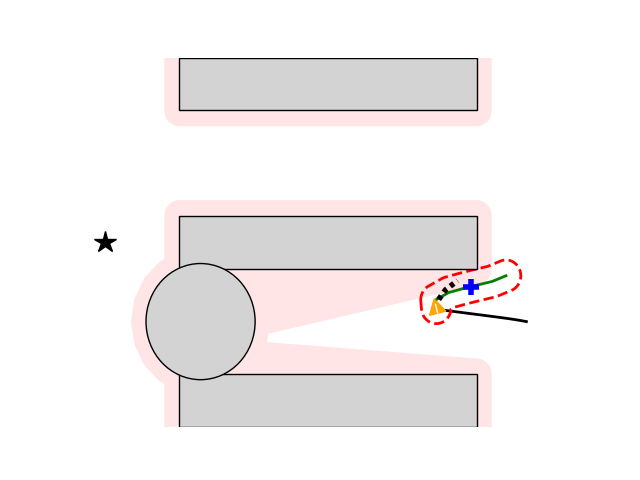}
                \caption{Time $t=2.2$: The moving obstacle has entered the lower corridor, blocking the way out. As the three intersecting obstacles are combined into a single starshaped obstacle, the reference path is now directed into the upper corridor.}
                \label{fig:corridor_2_4}
        \end{subfigure}%
        \hfill
        \begin{subfigure}[t]{0.24\textwidth}
                \includegraphics[width=\linewidth,trim={2.2cm 1.5cm 2.3cm 1.5cm},clip]{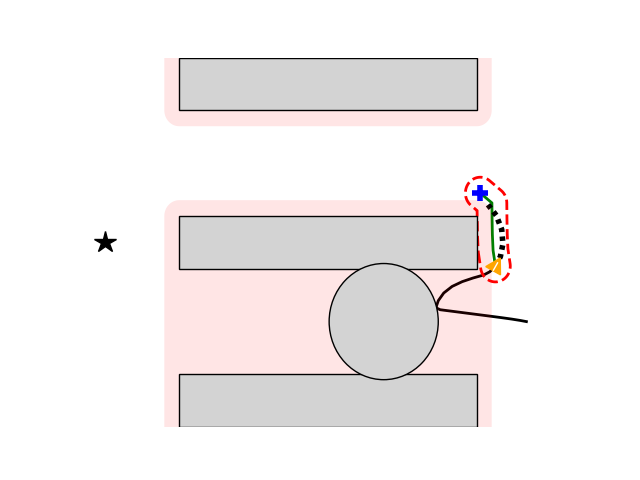}
                \caption{Time $t=4.2$: The robot has left the lower corridor and the combined starshaped obstacle is made convex since neither the robot nor goal position are within this convex hull.}
                \label{fig:corridor_4_0}
        \end{subfigure}%
        \hfill
        \begin{subfigure}[t]{0.24\textwidth}
                \includegraphics[width=\linewidth,trim={2.2cm 1.5cm 2.3cm 1.5cm},clip]{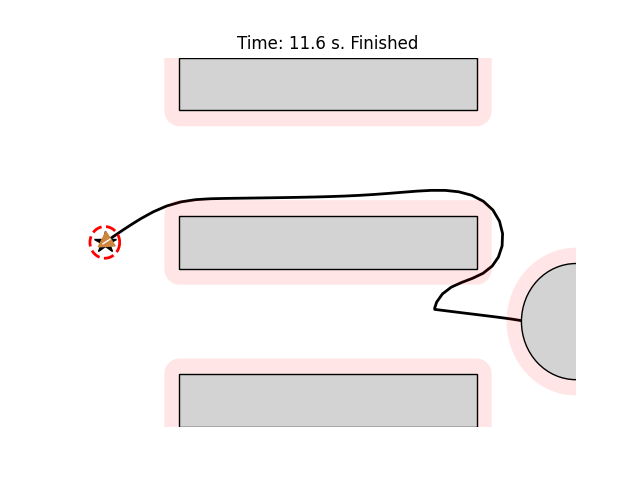}
                \caption{Time $t=11.2$: The robot has arrived at the goal position. When the moving obstacle has left the corridor, all obstacles are again treated separately.}
                \label{fig:corridor_11_0}
        \end{subfigure}
        \caption{A unicycle robot in a double corridor scenario with a circular moving obstacle entering one corridor. The obstacles, $\mathcal{O}$, are shown as grey regions, the dilated starshaped obstacles, $\mathcal{O}^{\star}$, are shown as red regions, the goal position is depicted as black star and the yellow triangle indicates the current robot position and orientation. The reference path, $\mathcal{P}$, is shown as green line with boundary of accompanying tunnel region, $\mathcal{P}\oplus \mathcal{E}$, depicted with dashed red line. The final reference point in the MPC horizon, $\hat{r}(s_N)$, is shown with blue cross and the predicted robot trajectory for the MPC solution, $h(\bar{x}_i),\ i\in \mathcal{N}$, is shown as dotted black line. The travelled path is shown as black solid line.}
        \label{fig:sim_corridor}
\end{figure*}

\begin{figure}
\begin{subfigure}[t]{0.48\linewidth}
        \includegraphics[width=\linewidth,trim={0.4cm 0.5cm 1.5cm 1.6cm},clip]{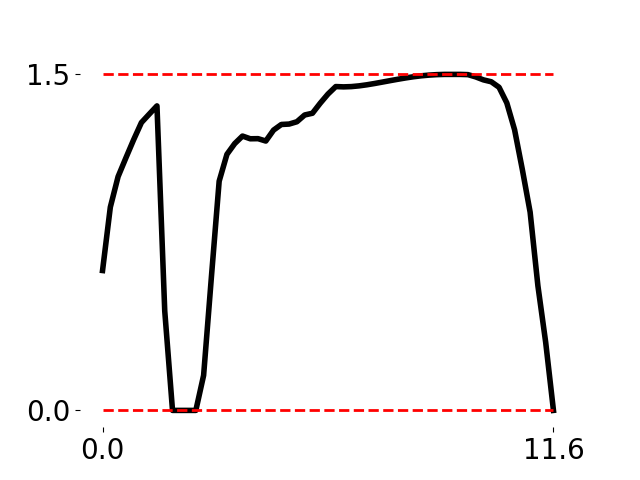}
     \caption{.}
     \label{fig:corridor_u1}
    \end{subfigure}
    \hfill
    \begin{subfigure}[t]{0.48\linewidth}
        \includegraphics[width=\linewidth,trim={0.4cm 0.5cm 1.5cm 1.6cm},clip]{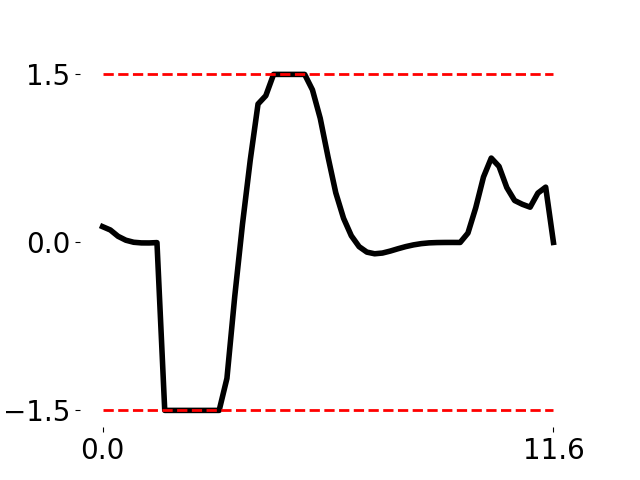}
         \caption{.}
         \label{fig:corridor_u2}
    \end{subfigure}
    \caption{The applied control signals in the double corridor scenario. First, the robot accelerates until the lower corridor is blocked, at which point it decelerates to full stop and completes a U-turn. Next, it turns around the center wall into the upper corridor and accelerates to and maintains full speed. Finally, it decelerates to reach the final goal position.}
    \label{fig:corridor_u}
\end{figure}

\section{Conclusion}
This paper proposed a motion control scheme for robots operating in a Cartesian plane containing a collection of dynamic, possibly intersecting, obstacles. The method combines previous work by the authors to adjust the workspace into a scene of disjoint obstacles with a closed form dynamical system formulation to generate a receding horizon path. An MPC controller is used to compute an admissible control sequence yielding attracting behavior towards the goal with close enough path tracking to ensure collision avoidance.

As the method relies on conservative treatment of the obstacle regions, a natural drawback is the possible gap closing in narrow passages. While conditions for global convergence of the reference path can be derived, it might be lost by use of the MPC depending on the horizon and robot constraints.
In future work, we aim at achieving convergence guarantees for the full motion control scheme. In the current formulation, environment boundaries are not considered. As robots commonly are restricted to some specific area, e.g. a room or building, it is of interest to include specification of a bounded workspace as well.

\bibliographystyle{ieeetr}
\bibliography{references}

\end{document}